\icmltitlerunning{Practical applications of metric space magnitude and  weighting vectors}
\newtheorem{theorem}{Theorem}
\newtheorem{lemma}[theorem]{Lemma}
\theoremstyle{definition}
\newtheorem*{example}{Example}
\newtheorem{definition}[theorem]{Definition}
\theoremstyle{remark}
\newtheorem*{remark}{Remark}
\newcommand{\z}{\zeta}
\newcommand{\R}{\mathbb{R}}
\newcommand{\one}{\mathbbm{1}}
\newcommand{\paren}[1]{\left\lparen{#1}\right\rparen}
\newcommand{\norm}[1]{\ensuremath{\left\|{#1}\right\|}}
\newcommand{\abs}[1]{\ensuremath{\left|{#1}\right|}}
\newcommand{\Mag}[1]{\ensuremath{\operatorname{Mag}{\!\paren{#1}}}}
\newcommand{\Vol}[1]{\ensuremath{\operatorname{Vol}{\!\paren{#1}}}}
\mathchardef\mhyphen="2D % Define a "math hyphen"
\newcommand\restr[2]{{% we make the whole thing an ordinary symbol
  \left.\kern-\nulldelimiterspace % automatically resize the bar with \right
  #1 % the function
  \vphantom{\big|} % pretend it's a little taller at normal size
  \right|_{#2} % this is the delimiter
  }}
\title{Practical applications of metric space magnitude and weighting vectors}
\author{ \textbf{Eric Bunch, Daniel Dickinson, Jeffery Kline, Glenn Fung}\\ % All authors must be in the same font size and format. Use \Large and \textbf to achieve this result when breaking a line
\\ %If you have multiple authors and multiple affiliations
% use superscripts in text and roman font to identify them. For example, Sunil Issar,\textsuperscript{\rm 2} J. Scott Penberthy\textsuperscript{\rm 3} George Ferguson,\textsuperscript{\rm 4} Hans Guesgen\textsuperscript{\rm 5}. Note that the comma should be placed BEFORE the superscript for optimum readability
American Family Insurance \\
Madison, WI 53783\\
\{ebunch, ddickins, jklin1, gfung\}@amfam.com % email address must be in roman text type, not monospace or sans serif
}
\date{}
\begin{document}

\maketitle
\thispagestyle{empty}
\begin{abstract}
Metric space magnitude, an active subject of research in algebraic topology, originally arose in the context of biology, where it was used to represent the effective number of distinct species in an environment. In a more general setting, the magnitude of a metric space is a real number that aims to quantify the effective number of distinct points in the space. The contribution of each point to a metric space’s global magnitude, which is encoded by the {\em weighting vector}, captures much of the underlying geometry of the original metric space.

Surprisingly, when the metric space is Euclidean, the weighting vector also serves as an effective tool for boundary detection. This allows the weighting vector to serve as the foundation of novel algorithms for classic machine learning tasks such as classification, outlier detection and active learning.   We demonstrate, using experiments and comparisons on classic benchmark datasets, the promise of the proposed magnitude and weighting vector-based approaches.

%the effective contribution of each point to the global magnitude, which is stored in a {\em weighting vector}. 
%FIXME: lots of literature focuses on magnitude, we're focused on weighting vector.
%The weighting vector Turns out that magnitude machinery can be used for boundary detection. 
%We can use it for classification, active learning and outlier detection.
%Experiments on real-world data sets show promise for...

\begin{comment}
 when applied to a one-parameter family of metric spaces $tX$ with scale parameter $t$,
Prior work has mostly focused on properties of magnitude in a global sense; in this paper we restrict the sets to finite subsets of Euclidean space and investigate its individual components. We give an explicit formula for the corrected inclusion-exclusion principle, and define a quantity associated with each point, called the \textit{moment} which gives an intrinsic ordering to the points. We exploit this in order to form an algorithm which approximates the convex hull. 
\end{comment}
\end{abstract}
\section{Introduction}
\begin{figure}[!t]
\centering
\includegraphics[width=0.4\textwidth]{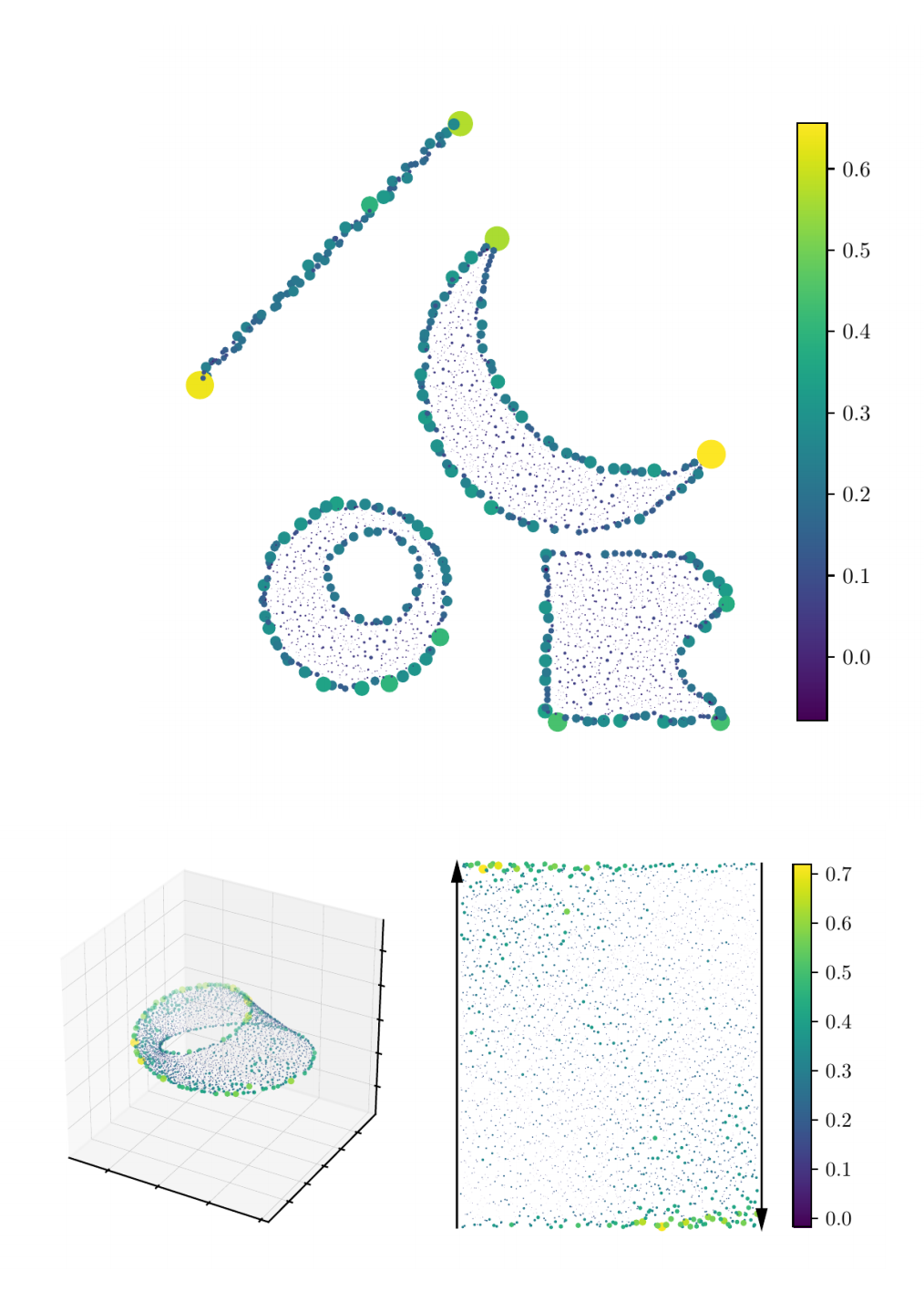}
\caption{A visualization of two weighting vectors.  The set in the top figure is supported within four disjoint components, and they live in $\R^2$. The set in the bottom figure is supported on an embedding of M\"{o}bius strip, and it lives in $\R^3$. In both images, the weight of each point is represented using color and point size.}
\label{fig:numerical_example}
\end{figure}
{\em Magnitude} is a scalar quantity that has meaning for many different kinds of data, and as with other scalar quantities such as rank, diameter, and measure, it has wide applicability, an intuitive interpretation and a solid theoretical foundation. Magnitude has been discovered, and rediscovered multiple times in both practical and theoretical contexts. In this paper, our goal is to apply recent developments drawn from magnitude theory to machine learning, and to empirically demonstrate characteristics of magnitude that, while implicitly described by abstract theoretical results, have not, to our knowledge, been explicitly stated before, nor have they been leveraged for practical purpose.

Informally, magnitude aims to quantify the effective number of points in a space. Our aim is more subtle: we wish to identify \textit{which points} are considered ``effective'' and ``important.''  We do this using the {\em weighting vector}. The weighting vector appears naturally in the definition of magnitude, and we find that the weighting vector, under appropriate conditions, serves as an effective \textit{boundary detector}.   It is this behavior that makes the weighting vector especially well suited for machine learning tasks. 

\subsection{Background, notation and examples}
We now define magnitude and the weighting vector, we present several canonical examples, and we state several central theorems of the field. While our focus is largely on subsets of $\R^n$, we note that the concept {\em magnitude} and {\em weighting vector} can be defined for far more general types of sets.
\begin{definition}
Let $X$ be a finite metric space with metric $d$. Denote the number of points in $X$ by $\abs{X}$. The \textit{similarity matrix} of $X$ is defined to be $\z_X(i, j) := \exp(-d(x_i, x_j))$ for $1 \leq i,  j \leq \abs{X}$. Whenever the inverse of $\z_X$ exists, we define the \textit{weighting vector} of $X$ to be 
    \begin{align*}
        w_X \coloneqq \z_X^{-1}\one,
    \end{align*}
where $\one$ is the $\abs{X} \times 1$ column vector of all ones. The \textit{magnitude} of $X$ is defined to be the quantity
    \begin{align*}
        \Mag{X} \coloneqq \one^Tw_X = \one^T \z_X^{-1} \one.
    \end{align*}
That is, $\Mag{X}$ is the sum of all the entries of the weighting vector $w_X$.
\end{definition}

\begin{example}
When $X$ is a finite subset of Euclidean space, $\z_X$ is a symmetric positive definite matrix [Theorem 2.5.3, \cite{Leinster2013TheMO}]. In particular, $\z_X^{-1}$ is guaranteed to exist. Hence, the weighting vector and magnitude exist for finite subsets of $\R^n$.
\end{example}

\begin{example}
Given an undirected, unweighted graph $G$, one can define a metric space whose points are given by the vertices of $G$, and whose metric is taken to be the length of the shortest path between two vertices. The weighting vector of this metric space is not guaranteed to exist.
\end{example}

\begin{definition}\label{def:infinitesets}
For an arbitrary subset $X \subseteq \R^n$, the magnitude of $X$ is defined as
\begin{align*}
    \Mag{X} = \sup\{ \Mag{Y} \mid Y \text{ is a finite subset of } X \}.
\end{align*}
\end{definition}
\begin{example}
In 1 dimension, and for $t>0$, one has that $\Mag{[0,t]}=1+t/2$. This was shown by Leinster in  \cite{Leinster2013TheMO}. The magnitude of the ball with radius $r$ in $\R^{2n+1}$ is a rational function of $r$, and this was recently demonstrated by Barcel\'{o} and Carbery~\cite{Barcelo18CptEuclid}.
\end{example}

For a finite metric space $(X, d)$, and any $t \in [0, \infty]$, we can define a new metric space $(tX, td)$ in the following way. The points of $tX$ are the same as those of $X$, and the metric $td$ is $d$ scaled by $t$: $td(x, y) \coloneqq t\cdot d(x, y)$. The \textit{magnitude function of $X$} is the map $t \mapsto \Mag{tX}$, and it is well-defined whenever $\z_{tX}$ is invertible. Although the inverse of $\z_{tX}$ may not be defined in general, it has been shown in [Proposition 2.2.6 \cite{Leinster2013TheMO}] that for finite subsets of $\R^n$, the magnitude function is analytic on $(0, \infty)$. We also have the following:
\begin{theorem}[Proposition 2.2.6 \cite{Leinster2013TheMO}]\label{theorem_t_inf_finite_set}
For $X \subset \R^n$ finite, $\lim_{t \rightarrow \infty} \Mag{tX} = \abs{X}$.
\end{theorem}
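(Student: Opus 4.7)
The plan is to exploit the simple observation that as $t \to \infty$, the off-diagonal entries of the similarity matrix $\zeta_{tX}$ decay to zero exponentially fast, so $\zeta_{tX}$ converges to the identity matrix, whence its inverse converges to the identity as well, and $\mathbf{1}^T \zeta_{tX}^{-1} \mathbf{1} \to |X|$.

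More concretely, first I would write out $\zeta_{tX}$ explicitly: its $(i,j)$ entry is $\exp(-t \cdot d(x_i,x_j))$, which equals $1$ when $i=j$ and satisfies $\exp(-t \cdot d(x_i, x_j)) \to 0$ as $t \to \infty$ whenever $i \neq j$, since $d(x_i, x_j) > 0$ for distinct points in a metric space and $X$ is finite (so there are only finitely many such off-diagonal distances, all bounded below by $\min_{i \neq j} d(x_i, x_j) > 0$). Thus $\zeta_{tX} \to I_{|X|}$ entrywise, and because the matrix is finite-dimensional this convergence is also in any norm.

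Next, I would invoke continuity of matrix inversion. Since $X \subset \R^n$ is finite, the earlier example (Theorem 2.5.3 of \cite{Leinster2013TheMO}) guarantees $\zeta_{tX}$ is positive definite, hence invertible, for every $t > 0$. The map $A \mapsto A^{-1}$ is continuous on the open set of invertible matrices, and the identity $I$ lies in this set, so $\zeta_{tX}^{-1} \to I^{-1} = I$ as $t \to \infty$. Taking the quadratic form with $\mathbf{1}$ on both sides then gives
\begin{equation*}
    \Mag{tX} = \mathbf{1}^T \zeta_{tX}^{-1} \mathbf{1} \;\longrightarrow\; \mathbf{1}^T I \mathbf{1} = |X|.
\end{equation*}

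There is no serious obstacle here; the only subtlety worth flagging is that one must ensure $\zeta_{tX}$ stays invertible \emph{for all} sufficiently large $t$ (not merely in the limit) so that $\zeta_{tX}^{-1}$ is defined along the sequence, and this is precisely what the positive definiteness result from the Euclidean setting guarantees. Once that is in hand, the argument reduces to continuity of inversion at a point where the inverse is trivially computed.
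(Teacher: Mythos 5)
Your argument is correct and is essentially the same as the one in Leinster's Proposition 2.2.6, which the paper cites without reproducing: off-diagonal entries of $\zeta_{tX}$ decay to zero since distinct points of a finite metric space are at positive distance, so $\zeta_{tX}\to I$, and continuity of inversion gives $\Mag{tX}=\one^T\zeta_{tX}^{-1}\one\to|X|$. Your remark about invertibility for all large $t$ is also handled correctly; in the Euclidean setting positive definiteness even gives it for every $t>0$.
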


The above proposition is one of the reasons underlying the informal interpretation of magnitude as quantifying the effective number of points in a space. The following very recent theorem gives a connection between the magnitude of $X \subset \R^n$ and the $n$-volume of $X$.
\begin{theorem}[Theorem 1 \cite{Barcelo18CptEuclid}]\label{theorem_vol}
For $X \subset \R^n$ nonempty and compact, we have
\begin{align*}
    \lim_{t \rightarrow 0^+} \Mag{tX} 
    &= 
    1, \text{\hspace{3pt} and}\\
    \lim_{t \rightarrow \infty}\frac{\Mag{tX}}{t^n} 
    &= 
    \frac{\Vol{X}}{n!\Vol{B_n}},
\end{align*}
\noindent where $B_n \subset \R^n$ is the unit ball.
\end{theorem}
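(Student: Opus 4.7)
The plan is to prove the two limits by separate methods: the $t \to 0^+$ limit is essentially a perturbation/continuity argument, while the $t \to \infty$ limit is a deep Fourier/potential-theoretic asymptotic that constitutes the main content of Barcel\'o--Carbery.

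For the first limit, I would start with the finite case. Writing $\zeta_{tY} = \one\one^T + t\, M(t)$ with $M(t)$ bounded, decomposing the solution of $\zeta_{tY} w = \one$ into its component along $\one$ and its orthogonal complement, and solving the resulting block system, one sees that $\one^T w = 1 + O(t)$, so $\Mag{tY} \to 1$. For compact $X$ the result passes to the supremum in Definition~\ref{def:infinitesets} via monotonicity $\Mag{Y_1} \le \Mag{Y_2}$ for $Y_1 \subseteq Y_2$ (a consequence of the positive definiteness of $\zeta$) together with uniform control of $\Mag{tY}$ over finite $Y$ inside a fixed ball containing $X$, and the matching lower bound is trivial because $\Mag{t\{x_0\}} = 1$ for any $x_0 \in X$.

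For the second limit, I would pass to a continuum formulation in which the weighting vector is replaced by a signed Borel measure $\mu_t$ on $X$ satisfying $\int_X e^{-t|x - y|}\, d\mu_t(y) = 1$ for every $x \in X$, so that $\Mag{tX} = \mu_t(X)$. The central tool is the Fourier identity $\hat f(\xi) = c_n (1 + |\xi|^2)^{-(n+1)/2}$ for $f(x) = e^{-|x|}$ on $\R^n$; rescaling gives the Fourier transform of $e^{-t|\cdot|}$ proportional to $t\,(t^2 + |\xi|^2)^{-(n+1)/2}$, so formally the inverse of convolution with $e^{-t|\cdot|}$ is the Fourier multiplier $t^{-1}(t^2 - \Delta)^{(n+1)/2}$. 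Applied distributionally to a function equal to $1$ on $X$ and then integrated against it, the leading-order contribution as $t \to \infty$ scales like $t^n \Vol{X}$, and the normalization constant $1/(n!\Vol{B_n})$ emerges from tracking $c_n$ together with the combinatorial factor from the highest-order term of $(t^2 - \Delta)^{(n+1)/2}$.

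The main obstacle, and the technical core of Barcel\'o--Carbery, is making this large-$t$ argument rigorous for an arbitrary compact $X$. The formal inverse is an unbounded differential operator, while $\mu_t$ is a priori only a signed Radon measure, so symbolic calculus does not apply directly. I would therefore first prove the asymptotic for $X$ with smooth boundary by integration by parts against $(t^2 - \Delta)^{(n+1)/2}$, with careful boundary-layer analysis in a tubular neighborhood of $\partial X$, and then approximate general compact $X$ from inside and outside by smoothly bounded compact subsets, invoking monotonicity of magnitude under inclusion together with continuity of Lebesgue measure. The delicate step is ensuring that boundary-layer contributions at order $t^{n-1}$ do not contaminate the leading $t^n$ term.
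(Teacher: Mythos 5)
First, a point of comparison: the paper does not prove this statement at all --- it is imported verbatim as Theorem 1 of Barcel\'o and Carbery \cite{Barcelo18CptEuclid} --- so your proposal can only be judged against the argument in that reference. Your orientation is broadly right: the $t \to 0^+$ limit is the soft half, the $t\to\infty$ asymptotic is the hard half, and the identity $\widehat{e^{-|\cdot|}}(\xi) = c_n(1+|\xi|^2)^{-(n+1)/2}$ is indeed the engine behind everything. But two of your bridging steps fail as stated.

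For the $t\to 0^+$ limit, the expansion $\one^T\zeta_{tY}^{-1}\one = 1 + O(t)$ for a fixed finite $Y$ is fine, but the implied constant depends on $\abs{Y}$ and on the geometry of $Y$, so it cannot deliver the ``uniform control over finite $Y \subset X$'' that you correctly identify as the missing ingredient before taking the supremum in Definition~\ref{def:infinitesets}; that uniformity has to come from monotonicity of magnitude under inclusion (valid for positive definite spaces) together with explicit control of $\Mag{t\bar{B}_R}$ for a ball containing $X$ (a rational function of $tR$ in odd dimensions, with even $n$ handled by embedding $\R^n \hookrightarrow \R^{n+1}$). For the $t\to\infty$ limit, the more serious gap is your plan to approximate a general compact $X$ \emph{from inside} by smoothly bounded compact subsets and invoke continuity of Lebesgue measure: a compact set of positive measure need not contain any smoothly bounded compact subset of positive measure (a fat Cantor set, or its product with a cube, has positive measure and empty interior), so inner approximation cannot produce the lower bound $\liminf_{t\to\infty}\Mag{tX}/t^n \ge \Vol{X}/(n!\Vol{B_n})$ for arbitrary $X$. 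In the actual proof this direction is essentially free: magnitude admits a variational characterization of the form $\Mag{X} = \frac{1}{n!\Vol{B_n}}\inf\{\norm{h}^2_{H^{(n+1)/2}} : h \equiv 1 \text{ on } X\}$ (with a specific normalization of the Sobolev norm), and after rescaling the multiplier obeys the pointwise bound $(t^2+|\xi|^2)^{(n+1)/2} \ge t^{n+1}$, so the trivial inequality $\norm{h}_{L^2}^2 \ge \Vol{X}$ yields the liminf for every compact set with no boundary regularity whatsoever. Only the matching upper bound requires constructing test functions supported near $X$, where outer approximation is legitimate by outer regularity of Lebesgue measure; your worry about $t^{n-1}$ boundary-layer contamination is real but is confined to that half of the argument.
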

% \subsection{Magnitude functions}

\subsection{Properties of the weighting vector}
The weighting vector plays a central role in the applications that are discussed below, but it is not obvious by inspection of its definition what useful information the individual entries of the weighting vector carries. To provide some intuition about this vector, we now highlight its key features. Our present aim is to convey a qualitative sense of things, so our focus is on numerical examples and basic facts. Note that the ability of the weighting vector to perform boundary detection is more than conjecture: it may be completely explained using harmonic analysis~\cite{folland1999real,meckes2015magnitude}. But for reasons of space and  scope, we limit our focus.

Let $X\subset \R^n$ be a finite set and recall that the weighting vector of $X\subset \R^n$ is defined as $w_X\coloneqq \zeta_{X}^{-1}\one$. This vector is related to the magnitude of $X$ through $\Mag{X}=\one'\zeta_{X}^{-1}\one=\one'w_{X}$. Note that while $X\subset\R^n$, the vector $w_X\subset \R^{\abs{X}}$, i.e., the dimension of the weighting vector is a function of the
size of $X$, and not of the dimension~$n$. Also note that the entries of $w_X$ may be indexed in a canonical way by $x\in X$. We call $w_X(x)$, the weight of $x$.  

Since all operations involved in evaluating $w_X$ are continuous, the weighting vector of a small perturbation of $X$ will approximate the weighting vector of $X$ itself. More precisely, let $X_\epsilon\coloneqq\left\{x+\epsilon \eta_x :x\in X, \norm{\eta_x}\leq 1\right\}$. Then for all $x\in X$, one has $\lim_{\epsilon\rightarrow 0}w_{X_{\epsilon}}(x+\epsilon\eta_x)=w_X(x)$. Since the similarity matrix $\zeta_{X}$ is positive definite, its inverse exists and is also positive definite. Thus, $\one'\zeta_{X}^{-1}\one = \one'w_{X}>0$. Although the average value of the entries of $w_X$ is guaranteed to be positive, it may happen that $w_X(x)<0$ holds for some $x\in X$.  It is currently unknown what, if any, significance to ascribe to the sign of $w_X(x)$.

Let $f:\R^{n}\rightarrow\R^{m}$ be an affine isometry, i.e., $f(x)=Qx+q$ for some $Q\in \R^{m\times n}$ and $q\in \R^m$, and $f$ satisfies $\norm{z-w} = \norm{f(z)-f(w)}$ for all $z,w\in\R^{n}$.
Since $f$ is an isometry, the similarity matrices of $X$ and $f(X)$ agree (i.e., $\zeta_{X}=\zeta_{f(X)}$), and as a result, $w_X(x)=w_{f(X)}(f(x))$. By the Mazur-Ulam theorem, all surjective isometries between normed spaces are necessarily affine. Thus, we have that the weighting vector is invariant under transformation by {any} surjective isometry. More concretely, in Figure~\ref{fig:numerical_example}, the weightings that are displayed are independent of the location and the orientation of the sets. 

When $t>0$ is large, the scaled space $tX$ has the property that all points in it are far from each other.  Thus, one has $\lim_{t\rightarrow\infty}\zeta_{tX}=\lim_{t\rightarrow\infty}\zeta_{tX}^{-1} = I$. By Theorem~\ref{theorem_t_inf_finite_set}, the magnitude function satisfies $\lim_{t\rightarrow\infty}\Mag{tX}=\abs{X}$. Combining these observations, we find that for all $x\in X$,  $\lim_{t\rightarrow\infty} w_{tX}(tx)=1$. The closely-related concept of a {\em scattered space} appears in prior work (Definition 2.1.2, \cite{Leinster2013TheMO}), where under conditions far more general than considered here, it is shown that scattered spaces have well-defined magnitudes, and hence, weighting vectors.

Conversely, when $t>0$ is small, each entry of the similarity matrix $\zeta_{tX}$ is close to 1. In particular, the limiting matrix is the rank-1 matrix $\one\one'$, which does not have an inverse.  However, by Theorem~\ref{theorem_vol}, one has $\lim_{t\rightarrow 0^+}\one'w_{tX}(x)=1$.  Empirically, when $t>0$ is very small, we find that the weights of ``interior points'' of the global space of $X$ are small, while the ``extreme'' points of $X$---especially points that live nearest the convex hull of $X$---are significantly larger.

Finally, we consider weighting vectors of $X\subset\R^n$ that is neither too scattered nor especially concentrated about the origin. As one example, let $X\subset\R^n$ be a regular convex polytope. By a symmetry argument, for all vertices  $x,y\in X$ one has $w_X(x)=w_X(y)$. Thus, modulo a normalizing constant, the weighting vector is completely specified.   Next, consider Figure~\ref{fig:numerical_example}, which displays weightings of two sets that do not have any special symmetry: $X_0\subset \R^2$ which consists of points supported in the union of four disjoint sets, and $X_1\subset\R^3$ which lives on an embedding of the M\"{o}bius strip. Both sets were generated using a uniform random sampling process. In these renderings, every $x\in X_{i}$ has its weight, $w_{X_{i}}(x)$, conveyed both by the marker size and by color, where $i=0,1$. It is clear from these figures that points within the relative interior of some component have low weight, while points in close proximity to some  boundary tend to have larger weight.  It is this empirical observation that leads to the utility of weightings in applications.

We close this section by observing that magnitude, and by extension, weighting vectors, are well-defined on a very general class of sets, including sets that are not necessarily subsets of $\R^n$. It is therefore possible to extend the notion that connects a point's weight and its proximity to a boundary to {\em any} space that has a weighting vector.

\subsection{Related work, paper structure}

An early reference to the concept of magnitude occurs in~\cite{Solow1994}, where it was introduced as a way to measure biological diversity. However, the mathematical motivations were not divulged in this paper. Two decades later, Leinster~\cite{Leinster2013TheMO} placed the magnitude of a metric space within a formal mathematical framework using category theory.  This highly abstract perspective lead to the current era, where it is being explored through many different lenses, including functional analysis \cite{MeckesPosDef, Barcelo18CptEuclid}, harmonic analysis~\cite{meckes2015magnitude} and homology theory \cite{leinster2017magnitude}, where it has been shown to be equivalent to an Euler characteristic.   Much of the prior emphasis has been on a set's  magnitude, and this focus has overshadowed the potential utility of the weighting vector.

Recently, {topological data analysis} has emerged as an approach to the problem of describing the shape of high-dimensional data~\cite{Edelsbrunner2000TopologicalPA, Scopigno04persistencebarcodes, computingPH}. One particularly popular topic within this field is persistent homology \cite{Edelsbrunner2000TopologicalPA}.  Recent efforts have realized magnitude as the Euler characteristic of a homology theory, called magnitude homology \cite{leinster2017magnitude}. It has also been shown that there is a direct relationship between magnitude homology and persistent homology \cite{ otter2018magnitude}; however, the current paper is the first known application of magnitude directly to machine learning.

%%\subsection{Magnitude}
% \label{subsec:wv}

We now describe the remaining sections of this paper. Section~\ref{sec:properties} presents practical techniques for working with, and computing, the magnitude and weighting vectors of a discrete set. Section~\ref{sec:algorithms} introduces three algorithms that leverage the weighting vector in some essential way. The algorithms perform classification, active learning, and outlier detection.  Section~\ref{sec:results} presents results. We end with concluding remarks in Section~\ref{sec:conclusions}.
\section{Useful properties of magnitude}
\label{sec:properties}
In this section, we offer some techniques that are useful when working with weighting vectors. We discuss how the computation of the weighting vector may be effectively computed by breaking the computation into smaller pieces and ``gluing'' the results together.

\subsection{Inclusion-Exclusion for Weight and Magnitude}
In this section we investigate how to calculate the weighting  vector for a set $Z = X \cup Y$ that is the union of two sets. Here $X, Y,$ and $Z$ are all finite subsets of $\R^n$. To approach this, first we investigate the case when $X$ and $Y$ are disjoint. Then we will look at the case when $Y \subset X$, and show how to calculate either $w_X$ or $w_Y$ when one knows the other. Finally we will arrive at a corrected version of the inclusion-exclusion principle for magnitude, as  well  as the weighting vector.

Before proceeding, we recall the definition of the \textit{Schur complement}.

\begin{definition} Let
$M \coloneqq 
\begin{bmatrix}
A & B \\
C & D
\end{bmatrix}$
be the block matrix where the matrices $A, B, C, D$ are of dimensions $n\times n,  n \times  m, m \times n,$ and $m \times m$ respectively. If $D$ is invertible, then the \textit{Schur complement} of $D$ in $M$ is the $n \times n$ matrix

    \begin{equation*}
        M/D = A - BD^{-1}C.
    \end{equation*}
    
\noindent Similarly, if $A$ is invertible, then the Schur complement of $A$ in $M$ is the $m \times m$ matrix

    \begin{equation*}
        M/A = D - CA^{-1}B.
    \end{equation*}
\end{definition}

Let $\emptyset \neq Y \subset X \subset \R^{n}$ be finite sets. Without loss of generality, we can index the points of $X$ such that the first $\abs{Y}$ of them correspond to those points in $Y$. Then we can see that $\z_X$ can be written as a block matrix
\begin{align}\label{eqn:zeta_block}
\z_X = 
\begin{bmatrix}
\z_Y & \z_{Y, \bar{Y}} \\
\z_{Y, \bar{Y}}^T & \z_{\bar{Y}}
\end{bmatrix},
\end{align}
where $\bar{Y} = X \setminus Y$, and $\z_{Y, \bar{Y}}$ denotes the submatrix of $\z_X$ formed by taking the rows corresponding to $Y$ and columns corresponding to $\bar{Y}$. We can now rewrite the formula $\z_Xw = \one$ using equation \ref{eqn:zeta_block} as the system of equations
\begin{align*}
\z_Y \restr{w_X}{Y} + \z_{Y, \bar{Y}} \restr{w_X}{\bar{Y}} 
&=
\one_Y \\
\z_{Y, \bar{Y}}^T \restr{w_X}{Y} + \z_{\bar{Y}} \restr{w_X}{\bar{Y}} 
&=
\one_{\bar{Y}},
\end{align*}
where $\one_Y$ and $\one_{\bar{Y}}$ are respectively the $\abs{Y} \times 1$ and $\abs{\bar{Y}} \times 1$ column vectors of all ones. Since both $\z_Y$ and $\z_{\bar{Y}}$ are invertible, we can form both of the Schur complements $\z_X/\z_Y$ and $\z_X / \z_{\bar{Y}}$. With these in hand, we can write
\begin{align}
\restr{w_X}{Y}
&=
(\z_X/\z_{\bar{Y}})^{-1} (\one_Y - \z_{Y, \bar{Y}} w_{\bar{Y}}) \label{eqn:disjoint_gluing_Y} \\
\restr{w_X}{\bar{Y}}
&=
(\z_X/\z_{Y})^{-1} (\one_{\bar{Y}} - \z_{Y, \bar{Y}}^T w_{Y}), \label{eqn:disjoint_gluing_Ybar} 
\end{align}
where $w_Y$ and $w_{\bar{Y}}$ are the weight vectors for $Y$ and $\bar{Y}$ respectively, and $\restr{w_X}{Y}$ is the weight vector of $X$, restricted to those indices corresponding to $Y$. Thus if we know $w_Y$ and $w_{\bar{Y}}$, equations \ref{eqn:disjoint_gluing_Y} and \ref{eqn:disjoint_gluing_Ybar} give a way to compute $w_X$.

Next, for finite sets $Y \subset X \subset \R^n$ we wish to calculate either the weight vector $w_X$ or $w_Y$ given the other. 
% A version of the following statement can be found on Terence Tao's blog. We give a slightly different proof here.

\begin{definition}\label{def:rho}
For a block matrix $M = \begin{bmatrix} 
A & B \\
C & D 
\end{bmatrix}$,

\noindent with $A$ invertible, define 

\begin{equation*}
    \rho_{MA} = \begin{bmatrix}
A^{-1} B (M/A)^{-1} C A^{-1} & -A^{-1} B (M/A)^{-1} \\
-(M/A)^{-1} C A^{-1} & (M/A)^{-1}
\end{bmatrix}.
\end{equation*}
\end{definition}

\noindent Now recall that for a block matrix $M$ as in Definition \ref{def:rho}, we have that

\begin{equation}\label{eqn:M_A_rho}
    M^{-1}
    =
    \begin{bmatrix}
    A^{-1} & 0 \\
    0 &0 \\
    \end{bmatrix}
    + \rho_{MA}.
\end{equation}

% \begin{lemma}\label{lemma:tao_blog}
% Let $M = \begin{bmatrix} 
% A & B \\
% C & D 
% \end{bmatrix}
% $, where $A$, $B$, $C$, and $D$  are $n - k \times n - k$, $n - k \times k$, $k \times n - k$ and $k \times k$ matrices respectively. Set 
% %$U 
% %= 
% %\begin{bmatrix}
% %0 \\
% %I_k
% %\end{bmatrix}$ and 
% $
% U = \begin{bmatrix}
% 0 & I_k
% \end{bmatrix}
% $. Then

% \begin{equation}\label{eqn:Minv_restr_U}
% M^{-1}
% =
% \begin{bmatrix}
% A^{-1} & 0 \\
% 0 & 0
% \end{bmatrix} 
% +
% M^{-1} U^T M/A U M^{-1}.
% \end{equation}
% \end{lemma}

% \begin{proof}
% Recall that

% \begin{align*}
% M^{-1} 
% =
% \begin{bmatrix}
% A^{-1} & 0 \\
% 0 & 0
% \end{bmatrix} 
% +
% \rho_{MA}.
% \end{align*}

% \noindent Then

% \begin{align*}
% &M^{-1} U^T M/A
% =
% \begin{bmatrix}
% -A^{-1} B \\
% I_k
% \end{bmatrix}
% \text{, and } \\
% &U M^{-1} 
% =
% \begin{bmatrix}
% -(M/A)^{-1} C A^{-1} & (M/A)^{-1}
% \end{bmatrix}.
% \end{align*}

% % \noindent and

% % \begin{equation}
% % U M^{-1} 
% % =
% % \begin{bmatrix}
% % -(M/A)^{-1} B^T A^{-1} & (M/A)^{-1}
% % \end{bmatrix}.
% % \end{equation}

% \noindent Hence

% \begin{equation}
% M^{-1} U^T M/A U M^{-1}
% =
% \rho_{MA}
% \end{equation}

% \noindent and Equation \ref{eqn:Minv_restr_U} follows. \qedhere
% \end{proof}

\begin{definition}
For $Y \subseteq X \subset \R^n$ finite sets, assume $\z_X$ is in block matrix format as in Equation \ref{eqn:zeta_block}. Define the matrix
\begin{equation*}
\rho_{XY} = \rho_{\z_X \z_Y}
\end{equation*}
\noindent where $\rho_{XY}$ is taken to be the zero matrix when $Y  = X$, and $\rho_{XY}$ is taken to be $\z_X$ when $Y = \emptyset$.
\end{definition}

\begin{lemma}\label{lem:weight_vector_subset}
For finite sets $Y \subset X \subset \R^n$, let $P_{XY}$ be a permutation matrix such that

\begin{equation*}
    P_{XY} \z_X P_{XY} = \begin{bmatrix}
    \z_Y & \z_{Y \bar{Y}} \\
    \z_{Y \bar{Y}}^T & \z_{\bar{Y}}.
    \end{bmatrix}
\end{equation*}
\noindent Then we have
\begin{align*}
w_X
&=
P_{XY}\begin{bmatrix}
w_Y \\
0
\end{bmatrix}
 + 
 P_{XY}\rho_{XY}\one,
\text{ \hspace{3pt} and} \\
 \Mag{X} 
 &= 
 \Mag{Y} 
 + 
 \one^T\rho_{XY}\one.
\end{align*}
\end{lemma}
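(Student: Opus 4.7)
The plan is to derive both identities directly from the block-matrix inverse decomposition recorded in Equation \ref{eqn:M_A_rho}, applied to the permuted matrix $P_{XY}\zeta_X P_{XY}$. The guiding observation is that once rows and columns are permuted so that the points of $Y$ come first, the block matrix $A = \zeta_Y$ is invertible (by the Euclidean positive-definiteness statement in Example~1), so Equation \ref{eqn:M_A_rho} applies and splits the inverse into a piece that already encodes $w_Y$ and a correction term $\rho_{XY}$ that absorbs all the coupling between $Y$ and $\bar Y$.

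First I would set $\widetilde\zeta_X := P_{XY}\zeta_X P_{XY}$ and apply Equation \ref{eqn:M_A_rho} with $A = \zeta_Y$, $D = \zeta_{\bar Y}$ to obtain
\begin{equation*}
\widetilde\zeta_X^{-1} = \begin{bmatrix} \zeta_Y^{-1} & 0 \\ 0 & 0 \end{bmatrix} + \rho_{XY}.
\end{equation*}
Next I would right-multiply by $\mathbf{1}$, using that $P_{XY}\mathbf{1}=\mathbf{1}$ and $\zeta_Y^{-1}\mathbf{1}_Y = w_Y$ by definition, to get $\widetilde\zeta_X^{-1}\mathbf{1} = \begin{bmatrix} w_Y \\ 0 \end{bmatrix} + \rho_{XY}\mathbf{1}$. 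Finally I would undo the permutation: since $P_{XY}$ is a permutation and the hypothesis $P_{XY}\zeta_X P_{XY} = \widetilde\zeta_X$ forces $P_{XY}^{-1}=P_{XY}$, one has $\zeta_X^{-1} = P_{XY}\widetilde\zeta_X^{-1}P_{XY}$, hence $w_X = \zeta_X^{-1}\mathbf{1} = P_{XY}\widetilde\zeta_X^{-1}\mathbf{1}$, which is exactly the first asserted identity. The magnitude identity then follows by left-multiplying by $\mathbf{1}^T$, using $\mathbf{1}^T P_{XY} = \mathbf{1}^T$ and $\mathbf{1}^T \begin{bmatrix} w_Y \\ 0 \end{bmatrix} = \mathbf{1}^T w_Y = \Mag{Y}$.

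What little difficulty there is lies in bookkeeping the permutation rather than in any deeper algebra: one must keep track of whether $\rho_{XY}$ refers to the permuted or unpermuted matrix, and verify that the conventions for the degenerate cases $Y=X$ (where $\rho_{XY}=0$) and $Y=\emptyset$ (where $\rho_{XY}$ is defined so as to recover $\zeta_X^{-1}$) make both identities hold trivially. Once those edge cases are disposed of and $P_{XY}$ is handled cleanly, the lemma reduces to a one-line consequence of the Schur-complement inverse formula \ref{eqn:M_A_rho}.
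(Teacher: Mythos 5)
Your proposal is correct and follows essentially the same route as the paper's (much terser) proof: set $M = P_{XY}\z_X P_{XY}$, invoke Equation \ref{eqn:M_A_rho} with $A=\z_Y$, multiply on the right by $\one$, and undo the permutation; the magnitude identity then falls out by left-multiplying by $\one^T$. Your extra care with the permutation bookkeeping (using $P_{XY}\one=\one$, $\one^T P_{XY}=\one^T$, and the paper's implicit convention that $P_{XY}$ is its own inverse) simply fills in details the paper leaves unstated.
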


\begin{proof}
This follows by setting $M = P_{XY}\z_{X}P_{XY}$, employing Equation \ref{eqn:M_A_rho}, and multiplying on the right by $\one$.
\end{proof}

% \begin{corollary}
% For finite sets $Y \subset X \subset \R^n$, let $P_{XY}$ be as in Corollary \ref{cor:weight_vector_subset}. Then we have

% \begin{equation}\label{eqn:magnitude_restriction}
% |X| = |Y| + \one^T\rho_{XY}\one.
% \end{equation}
% \end{corollary}

\noindent We can now calculate the weight vector of $X \cup Y$ where $X$ and $Y$ are not necessarily disjoint. This can be viewed as a corrected inclusion-exclusion principle for weight vectors as well as magnitude.

% \textit{need to be careful here about permutations!}
\begin{theorem}
For finite sets $X, Y \subset \R^n$, set $Z = X \cup Y$. Then we have

\begin{align*}
    w_{Z} 
    &=
    P_{Z X}\left( \begin{bmatrix}
    w_X \\ 0
    \end{bmatrix} 
    + 
    \rho_{Z X}\one \right) 
    +
    P_{Z Y} \left( \begin{bmatrix}
    w_Y \\ 0
    \end{bmatrix} 
    + 
    \rho_{Z Y} \one \right) \\
    &- 
    P_{Z X \cap Y} \left( \begin{bmatrix}
    w_{X \cap Y} \\ 0
    \end{bmatrix} 
    - 
    \rho_{Z X \cap Y}\one \right), \text{\hspace{3pt} and} \\
    \Mag{Z}
    &=
    \Mag{X}
    + 
    \Mag{Y}
    -
    \Mag{X \cap Y} \\
    &+
    \one^T \rho_{Z X} \one  
    +
    \one^T \rho_{Z Y} \one 
    -
    \one^T \rho_{Z X \cap Y} \one.
\end{align*}

% \noindent and

% \begin{align*}
%     |Z|
%     &=
%     |X| 
%     + 
%     |Y| 
%     -
%     |X \cap Y| 
%     +
%     \one^T \rho_{Z X} \one  
%     +
%     \one^T \rho_{Z Y} \one \\ 
%     &-
%     \one^T \rho_{Z X \cap Y} \one.
% \end{align*}

\end{theorem}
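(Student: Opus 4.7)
The plan is to derive the weight-vector identity by invoking Lemma~\ref{lem:weight_vector_subset} three separate times---once for each of the subset containments $X \subseteq Z$, $Y \subseteq Z$, and $X \cap Y \subseteq Z$---and then combining the three resulting expressions for $w_Z$ using the standard inclusion-exclusion bookkeeping $w_Z = w_Z + w_Z - w_Z$. Each invocation of the lemma produces an identity of the shape
\begin{equation*}
w_Z \;=\; P_{ZS}\begin{bmatrix} w_S \\ 0 \end{bmatrix} + P_{ZS}\rho_{ZS}\one,
\end{equation*}
with $S$ chosen in turn to be $X$, $Y$, and $X \cap Y$. Adding the first two representations and subtracting the third regroups the right-hand sides into precisely the three bracketed expressions appearing in the theorem.

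For the magnitude identity, I would left-multiply the weight-vector identity by $\one^T$. Because each permutation matrix satisfies $\one^T P_{ZS} = \one^T$, the block-vector term collapses via $\one^T \begin{bmatrix} w_S \\ 0 \end{bmatrix} = \one^T w_S = \Mag{S}$, and the correction term simplifies to $\one^T \rho_{ZS} \one$. Aggregating the three contributions yields $\Mag{X} + \Mag{Y} - \Mag{X \cap Y}$ plus the advertised trio of $\rho$ corrections, with the sign of the last correction flipping because of the leading minus on the $X \cap Y$ term.

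The only real subtlety I anticipate is verifying the degenerate cases for $X \cap Y$, which are handled by the boundary conventions for $\rho$ stated just before Lemma~\ref{lem:weight_vector_subset}. When $X$ and $Y$ are disjoint, $X \cap Y = \emptyset$ and the convention $\rho_{Z, \emptyset} = \z_Z$ must be checked to recover the disjoint-union gluing encoded by equations~(\ref{eqn:disjoint_gluing_Y}) and~(\ref{eqn:disjoint_gluing_Ybar}); conversely, when $Y \subseteq X$ the convention $\rho_{SS} = 0$ forces the $X \cap Y$ contribution to cancel against $w_Y$, reducing the identity back to Lemma~\ref{lem:weight_vector_subset} applied to $X \subseteq Z$. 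Apart from this bookkeeping, the argument is a direct rearrangement: the theorem is essentially a linear-algebraic lift of $|A \cup B| = |A| + |B| - |A \cap B|$ to the setting of weighting vectors, so I do not expect a deeper obstruction beyond carefully tracking the permutations and signs.
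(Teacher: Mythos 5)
Your proposal is correct and is exactly the paper's proof: the paper likewise applies Lemma~\ref{lem:weight_vector_subset} to each of the containments $X \subseteq Z$, $Y \subseteq Z$, and $X \cap Y \subseteq Z$ and combines the three resulting representations of $w_Z$ by the inclusion-exclusion bookkeeping $w_Z = w_Z + w_Z - w_Z$, then left-multiplies by $\one^T$ for the magnitude statement. One small caveat: that combination produces a \emph{plus} sign in front of $\rho_{Z X\cap Y}\one$ inside the last bracket (uniform with the other two terms), so the minus sign printed there in the theorem's weight-vector identity is a typo---as confirmed by the $-\one^T\rho_{Z X\cap Y}\one$ appearing in the magnitude identity---and your claim that the regrouping yields ``precisely'' the printed bracketed expressions needs that one-sign correction.
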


\begin{proof}
This follows by applying Lemma \ref{lem:weight_vector_subset} to each subset considered, e.g.

\begin{equation*}
    w_Z 
    = 
    P_{ZX}\begin{bmatrix}
    w_X \\ 0
    \end{bmatrix}
    +
    P_{ZX}\rho_{ZX}\one.\qedhere
\end{equation*}
\end{proof}

% \noindent Finally, we have what could be called a corrected version of the inclusion-exclusion principle for magnitude.

% \begin{corollary}
% For finite sets $X, Y \subset \R^n$, we have

% \begin{align*}
%     |X \cup Y|
%     &=
%     |X| 
%     + 
%     |Y| 
%     -
%     |X \cap Y| \\
%     &+
%     \one^T \rho_{X \cup Y X} \one  
%     +
%     \one^T \rho_{X \cup Y Y} \one  
%     -
%     \one^T \rho_{X \cup Y X \cap Y} \one.
% \end{align*}
% \end{corollary}

\subsection{Numerical Considerations}\label{numerical_considerations}

In the setting where we have finite sets $Y \subset X \subset \R^n$, and we have calculated $w_Y$, we can calculate $w_X$ without having to invert the entire matrix $\z_X$ using Corollary \ref{lem:weight_vector_subset}. Since 

    \begin{equation*}
        w_X = \begin{bmatrix}
        w_Y \\ 0
        \end{bmatrix} + \rho_{XY}\one,
    \end{equation*}
    
\noindent we only need to invert the matrices $\z_Y$--which we are assuming we have already done--and $\z_X/\z_Y$, which is an $\abs{X\setminus Y} \times \abs{X \setminus Y}$ matrix. Then all the matrix products must be performed in the block matrix formulation of $\rho_{XY}$. In particular, for the case when we are adding a single point to the set $Y$, $\z_X/\z_Y$ is a scalar, and the products needed to form $\rho_{XY}$ are matrix-vector products. This will be used in the sequel to perform more efficient inference of the machine learning classifier.

\section{Algorithms}\label{sec:algorithms}
In this section we give details on how one may use weighting vectors and magnitude for a number of typical machine learning tasks.

\subsection{Magnitude as a classifier}\label{sec:wt_vector}

% define classification task
In this section, we develop an algorithm that uses metric space magnitude for a machine learning classification task. In a classification task, we are given a set $X$ of $m$ training examples in $\R^n$, $x_i \in X \subset \R^n$, $i \in \{1,2,...,m\}$. Each $x_i$ has an associated label, $l(x_i) \in L$, which is an element of a finite set of possible labels, $\abs{L} = k < \infty$. Given an unlabeled new point, $x^{\prime} \in \R^n$, we seek to assign it an associated label $l(x^\prime) \in L$.

% set up for the algorithm using magnitude
% \subsubsection{General Approach}\label{sec:wt_vector}
Classification is fundamentally a task of finding or defining boundaries, thus because the weight vector can serve as a boundary detector, it is a natural fit for the task. In a classification task, we are working with finite sets of points $X$, so the term ``boundary'' is not well-defined in the mathematical sense. This prompts the following convention: A point $x_i \in X \subset \R^n$ with $\abs{X} < \infty$ is in the interior of $X$ if its weight value is sufficiently small (where ``sufficient'' is context-specific). Two points regarding our convention are worth mentioning. First, for convex sets, Definition \ref{def:infinitesets} ensures that our convention matches with intuition on finite subsets that are sampled sufficiently densely, as the points with small weight all lie near the interior. Second, we can't distinguish between a point near the boundary of a set and one on the exterior of a set, as both will have relatively high weight. However, as discussed below, using our convention of interior points will be sufficient for use in a classification setting.

The weight of a point, and therefore our notion of interior points of a finite set captures global information, as it depends on all other points in the set. By changing other points in the set $X$, but leaving $x_i$ fixed, it's weight $w_i$ changes; the difference in the weight of a point relative to changes in the set is the key part of the classification algorithm.

Let $L = \{L_1,L_2,...,L_k\}$ be the set of labels, and $X_i = \{x \in X \mid l(x) = L_i\}$. If $x^\prime$ is an unlabeled point, the logic proceeds as follows. For each label $L_i$, compute $w_i^\prime$, the weight of $x^\prime$ in the set $\{x^\prime\} \cup X_i$. If weight vectors and inverse matrices for each $X_i$ are computed in advance and cached, by the discussion in \ref{numerical_considerations}, each $w_i^\prime$ only requires matrix multiplication of order $\abs{X_i}$ and inverting a $1 \times 1$ matrix. Intuitively, if $w_i^\prime$ has a low value, it likely is an interior point of $X_i$ and therefore $l(x^\prime) = L_i$ is appropriate. However, if $w_i^\prime$ has a high value, it is likely not on the interior of $X_i$, so another label is more appropriate. Figure \ref{classifier_example} shows an example.

% \subsubsection{Details}
 If the classes are well-balanced and have similar underlying distributions, using the original metric space for each class is appropriate as the values of the $w_i^\prime$ will be similarly scaled and directly comparable. When the classes are imbalanced or have different underlying distributions, that assumption may not be appropriate, as the values of $w_i^\prime$ will not necessarily be comparable. We overcome this potential limitation by introducing a parameter $t_i$ for each $L_i$ that is used to scale distances when calculating $w_i^\prime$, that is we perform all operations related to $L_i$ in the metric space $(t_i X,t_i d)$. Optimal $t_i$ can be tuned during training for example using grid search and cross-validation. For simplicity and readability, we omit $t$ from the basic version in algorithm \ref{algo:classification}. 

We can further ensure consistency between the $w_i^\prime$ for different $i$ by introducing a function $\text{SCALE}_i:(\R,\R^{\abs{X_i}}) \rightarrow \R$, which serves to normalize $w_i^\prime$ relative to weights of other points with label $L_i$. Taking $w_j^i$ to be the weight of $x_j \in X_i$, some examples of possible functions are absolute value, $\text{SCALE}_i(w_i^\prime,\{w_j^i \mid x_j\in X_i\}) = \abs{w_i^\prime}$, and percentile $\text{SCALE}_i(w_i^\prime,\{w_j^i \mid x_j\in X_i\}) = \frac{\abs{\{w_j^i \mid w_j^i <= w_i^\prime\}}}{\abs{X_i}}$.

% \subsubsection{Decision Function and Assigning a \texttt{NULL} class}\label{assgn_null_class}
We select a class label using a function $\text{DECIDE}$ which operates on the $w_i^\prime$ after they have been scaled using $\text{SCALE}_i$. Letting $\text{S}_i(w_i^\prime)$ denote $\text{SCALE}_i(w_i^\prime,\{w_j^i \mid x_j\in X_i\})$, an example of $\text{DECIDE}$ is argmax, $\text{SCALE}(\text{S}_1(w_1^\prime),\text{S}_2(w_2^\prime),...,\text{S}_k(w_k^\prime)) = i $ where $\text{S}_i(w_i^\prime)>\text{S}_j(w_j^\prime)$ for all $j\neq i$ By allowing $\text{DECIDE}$ to accept one additional threshold parameter, however, the algorithm can account for previously unseen classes as follows. If all $w_i^\prime$ are above the threshold parameter, it is likely the point is far from any of the labeled points, and thus from an unseen class, so it is assigned \texttt{NULL}. Otherwise, apply the decision function as described above. Note that for simplicity and readability, we omit the threshold parameter from the basic version presented in algorithm \ref{algo:classification}. 
%%%%%%%%%%%%%%%
% Also have images of two more confusion matrices. Not sure how to get figure with multiple images in it
%%%%%%%%%%%%%%%
% \begin{figure}
% \centering
% \begin{subfigure}
%   \centering
%   \includegraphics[scale=0.1]{paper_arxiv/magnitude_cfn_mtx_null_class_best.png}
%   \caption{1a}
%   \label{fig:sfig1}
% \end{subfigure}
% \begin{subfigure}
%   \centering
%   \includegraphics[scale=0.1]{paper_arxiv/magnitude_cfn_mtx_null_class_1.png}
%   \caption{1b}
%   \label{fig:sfig2}
% \end{subfigure}
% \caption{}
% \label{fig:fig}
% \end{figure}

% \begin{figure}[h]
% \centering
% \subfloat[Subfigure 1 list of figures text][Subfigure 1 caption]{
% \includegraphics[width=0.4\textwidth]{paper_arxiv/magnitude_cfn_mtx_null_class_best.png}
% \label{fig:subfig1}}
% \qquad
% \subfloat[Subfigure 2 list of figures text][Subfigure 2 caption]{
% \includegraphics[width=0.4\textwidth]{paper_arxiv/magnitude_cfn_mtx_null_class_1.png}
% \label{fig:subfig2}}
% \subfloat[Subfigure 3 list of figures text][Subfigure 3 caption]{
% \includegraphics[width=0.4\textwidth]{paper_arxiv/magnitude_cfn_mtx_null_class_2.png}
% \label{fig:subfig3}}
% \qquad
% % \subfloat[Subfigure 4 list of figures text][Subfigure 4 caption]{
% % \includegraphics[width=0.4\textwidth]{figure4.jpg}
% % \label{fig:subfig4}}
% \caption{This is a figure containing several subfigures.}
% \label{fig:globfig}
% \end{figure}
\begin{figure}[ht]
\centering
  \includegraphics[scale=0.4]{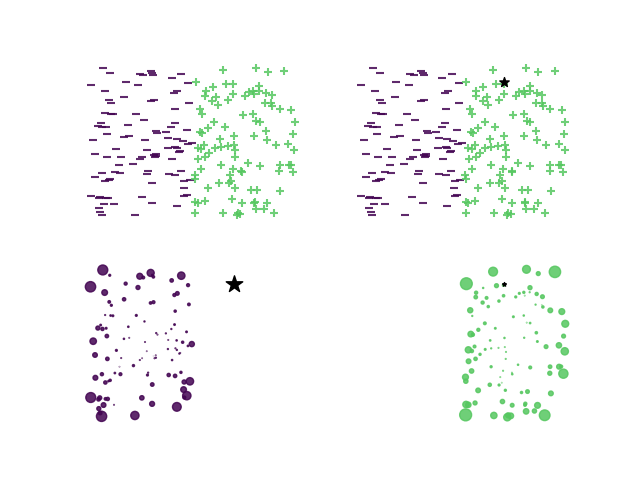}
  \caption{Upper left: Training data $X$, with $L_0 = -$ and $L_1 = +$. Upper right: $X \cup x^\prime$, with a star denoting $x^\prime$. Lower left: $\{x^\prime\} \cup X_0$, with $w_0^\prime = 0.517$, and the sizes of markers indicate weight. Lower right: $\{x^\prime\} \cup X_1$, with $w_1^\prime = 0.026$, and the sizes of markers indicate weight.}
  \label{classifier_example}
\end{figure}

\begin{algorithm}[H]
\caption{Classification via weighting vector}
{\small
\begin{algorithmic}
\INPUT Data set $X$, $L = \{L_1,L_2,...,L_k\}$ labels, function $\text{DECIDE}:\R^k \rightarrow \{1,2,...,k\}$, function $\text{SCALE}_i:(\R,\R^{\abs{X_i}}) \rightarrow \R$ for each $i\in\{1,2,...,k\}$
\INPUT unlabeled point $x^\prime$
% \STATE $X_{in} = \{ x \in X \mid \text{abs}(w_X(x)) < \text{median}(w_X) + 1.5\text{std}(w_X) \}$
\STATE $p$ = []
\FOR {$i \in \{1,2,...,k\}$}
\STATE $Y = \{x^\prime\} \cup X_i$
\STATE $w_i^\prime = w_Y(x^\prime)$
\STATE $w=\text{SCALE}_i(w_i^\prime,W_{X_i})$
\STATE $p.$append($w$)
\ENDFOR
\STATE let $j = \text{DECIDE}(p)$
\OUTPUT $L_j$
\end{algorithmic} 
}
\label{algo:classification}
\end{algorithm}

% If $w_i$ is the weight of $x_i \in X \subset \R^n$ and $w_i^\prime$ is the weight of $x_i \in X^\prime  \subset \R^n$ where there is no distance-preserving transformation $f$ from $X$ to $X^\prime$ with $f(x_i) = x_i$, then $w_i \neq w_i^\prime$

% Second, if a point $x_i \in X$ is far from all other points $x_j \in X$ $x_j\neq x_i$, then the weight of $x_i$ will be close to one, and therefore likely to be on the boundary by our convention. 

% the global information captured by the weight of a point

% algorithm
% \subsubsection{Classification via weighting vector}
% Based the discussion in \ref{sec:wt_vector}, we present an algorithm for classification using the weight vector derived from metric space magnitude. Denote $X_i = \{x \in X \mid l(x) = L_i\}$, $W_Y$ be the weight vector of $Y$, and $w_Y(x)$ be the weight of $x \in Y$.

% captures global information
% classification is all about determining/finding boundary between classes \implies a tool the highlights boundaries is a natural fit

\subsection{Magnitude for active learning}
\label{sec:al}
\begin{figure}[ht]
 \centering
  \includegraphics[scale=0.18]{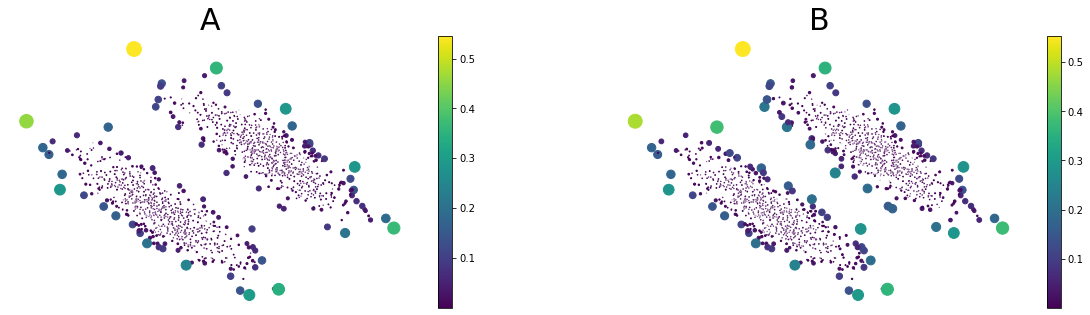}
 \caption{Magnitude for active learning example. A is weight of whole data set. B is weight of each class separately.}
  \label{active_learning_example2}
 \end{figure}
Next, we will describe how we can use magnitude and the weight vector to define a query strategy for an active learning algorithm. As stated in \cite{activeLearningSurvey:Burr}, ``The key idea behind active learning is that a machine learning algorithm can achieve greater accuracy with fewer training labels if it is allowed to choose the data from which it learns.''

Approaches that minimize the number of human feedback needed to train machine learning models have sparked renewed interested due to the cost of labeling and the the fact that recent deep-learning-based approaches need handle large amounts of training data to achieve optimal performance. 
Let $\mathcal{L}$ (the labeled dataset) and $\mathcal{U}$ the (unlabeled dataset) be two subsets of the available pool of training data $X$, with $X=\mathcal{U} \cup \mathcal{L}$ and $\mathcal{U} \cap \mathcal{L}=\emptyset$. An iteration of the algorithm will pick some points in  $\mathcal{U}$ to be labeled by an oracle (transferring them to  $\mathcal{L}$). The current model will be then updated using the new updated dataset  $\mathcal{L}$ and its corresponding labels. 

For simplicity we will state the algorithm for a binary classification problem i.e. when $L=\{L_{0},L_{1}\}$, however it can be trivially extended to a multi-class problem. 

The intuition behind the algorithm is simple: at each iteration $i$, we assign every training data point to one of the sets  $\tilde{X_0}$ or $\tilde{X_1}$ according to its predicted label by the current classifier $f_i$. We will calculate the corresponding weight vectors $w_{\tilde{X_{0}}}$ and $w_{\tilde{X_{1}}}$. Then, we choose to label (submit to an oracle for labeling)  the point with the minimum value (interior point) and the with the maximum value (likely to be in the boundary) for both sets $\tilde{X_0}$ and $\tilde{X_1}$. By choosing  this way we are aiming to: (a) reinforce, validate and refine high confidence classifier information (labels) acquired in prior iterations (exploitation) and (b) to acquire labels in the predicted class boundaries where our classifier confidence is potentially lower (exploration). The proposed active learning algorithm is stated below.

\begin{algorithm}
\label{alg:al}
\caption{Active learning via weighting vector}
{\small
\begin{algorithmic}
\INPUT Data set $X$, 
\STATE $\mathcal{L}=\emptyset$; $\mathcal{U}=X$
\STATE initialize $\mathcal{L}$ ;  $\mathcal{U}=X-\mathcal{L}$;  with it's corresponding $\mathcal{Y_{\mathcal{L}}}$
\STATE $f={\bf train\_classifier}(\mathcal{L}$,$\mathcal{Y_{\mathcal{L}}}$)
\WHILE{(not converged) {\bf or} (labeling budget not reached)}
    \STATE $\tilde{X_i} = \{ x \in X \mid f(x)  = i \}$ for $i = 0, 1$.
    \STATE calculate weighting vectors $w_{\tilde{X_{i}}}$
    \STATE $Q_{\min,i}=\underset{\mathcal{U}}{\arg\min}  \ {\left| w_{\tilde{X_{i}}}\right|}$ for $i = 0, 1$
    \STATE $Q_{\max,i}=\underset{\mathcal{U}}{\arg\max}  \ {\left|w_{\tilde{X_{i}}}\right|}$ for $i = 0, 1$
    \STATE $\mathcal{Y_{\mathcal{Q}}}$=query\_labels($Q_{\min,0}$,$Q_{\max,0}$,$Q_{\min,1}$,$Q_{\max,1}$)
    \STATE $\mathcal{L}=\mathcal{L} \cup \{Q_{\min,0}$,$Q_{\max,0}$,$Q_{\min,1}$,$Q_{\max,1}\}$
    \STATE $\mathcal{Y_{\mathcal{L}}}=\mathcal{Y_{\mathcal{L}}} \cup  \mathcal{Y_{\mathcal{Q}}}$
    \STATE $\mathcal{U}=X-\mathcal{L}$;
    \STATE $f={\bf train\_classifier}(\mathcal{L},\mathcal{Y_{\mathcal{L}}}$)
\ENDWHILE   
\OUTPUT $f$
\end{algorithmic}
}
\label{algo:activelearning}
\end{algorithm}
Where ${\left| w_{\tilde{X_{i}}}\right|}$ denotes all components of the vector $ w_{\tilde{X_{i}}}$ in absolute value.
We present some numerical experiments in Section \ref{sec:results}.
\subsection{Magnitude for Outlier Detection}

In this section we give an algorithm that uses the values of the weight vector of a set to find outliers in a dataset. As we have seen, the weight vector serves as a boundary detector for a data set. But if the boundary is not well defined because there are outlier data points, we can use the weight to mark points as outliers. Suppose we have a data set $X \subset \R^n$, and wish to determine if a new point $x \in \R^n$ should be considered an outlier with respect to $X$. By looking at the value $\gamma_{Xx} := \one^T\rho_{X\cup\{x\}X}\one = \Mag{X\cup \{x\}} - \Mag{X}$, we can see if adding $x$ increased the magnitude substantially, thereby greatly extending the "border" of $X$. By Lemma 3.1.3 in \cite{Leinster2013TheMO} we have that $0 \leq \gamma_{Xx}$.

Care must be taken, however; both the points on the boundary of the data set, and the outlier points will have high weight relative to the interior of the data set. Thus we collect all the points in $X$ whose weight is below a threshold (here we take median weight plus 1.5 times standard deviation), and denote this subset as $X_{in}$, the \textit{inliers}. The points of $X$ not in $X_{in}$ we call  \textit{outlier candidates}, and denote as $X_{out}$. Next, for each $x \in X_{out}$ with  $\gamma_{Xx}$ less than a user-defined threshold $0 \leq \tau$, we move from $X_{out}$ to $X_{in}$. Then we have our final decomposition of the data set into inliers and outliers: $X = X_{in} \cup X_{out}$. We record this algorithm in Algorithm \ref{algo:outlier}.

In Figure \ref{fig:outlier_example_synthetic} we have the results of this algorithm using synthetic data. Inlier data was generated from two Gaussian distributions, and outliers were drawn from a uniform distribution.
% and in  Figure \ref{fig:outlier_example_vertebral} we have detected outliers in the vertebral data set \cite{Dua:2019}.
\begin{remark}
It can be noted that the \texttt{NULL} class prediction algorithm described in Section \ref{sec:wt_vector} can be viewed as a type of online outlier detection algorithm. If the same paradigm is used when there is a single class, we obtain an outlier detection algorithm that is trained on data that only contains inliers.
\end{remark}
\begin{algorithm}
%[H]
\caption{Outlier detection via magnitude}
\begin{algorithmic}
\INPUT dataset $X$, threshold $\tau$
\STATE $X_{in} = \{ x \in X \mid \text{abs}(w_X(x)) < \text{median}(w_X) + 1.5\text{std}(w_X) \}$
\STATE $X_{out} = X \setminus X_{in}$
\FOR {$x \in X_{out}$}
\IF {$\gamma_{Xx} < \tau$}
\STATE $X_{in} \gets x$
\ENDIF
\ENDFOR
\end{algorithmic} 
\label{algo:outlier}
\end{algorithm}
\begin{figure}
\centering
  \includegraphics[scale=0.25]{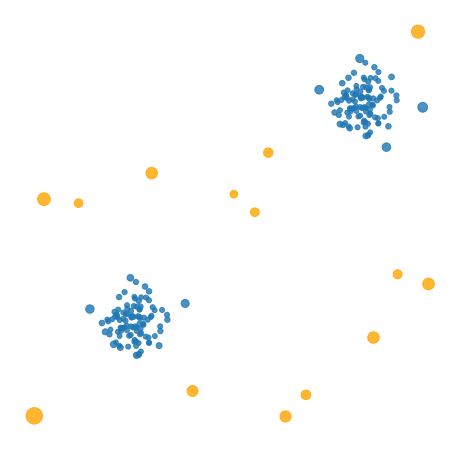}
  \caption{Outlier detection for synthetic data, $\tau = 0.2$}
  \label{fig:outlier_example_synthetic}
\end{figure}
% \begin{figure}
% \centering
%   \includegraphics[scale=0.2]{vertebral_outliers_magnitude.png}
%   \caption{Outlier detection for vertebral data set, $\tau = 0.5$}
%   \label{fig:outlier_example_vertebral}
% \end{figure}
\section{Results}\label{sec:results}

\subsection{Classification Experiments}

To test the classification algorithm, we ran a set of ten experiments across 5 classic benchmark datasets from the UCI repository, a synthetic two-dimensional checkerboard dataset, as well as the scikit-learn digit and iris datasets, and  multiple classifiers. Each experiment consisted of using a random stratified splitting method to partition the the data set into a training set consisting of 70\% of the data, and a testing set consisting of the remaining 30\%. The classifiers were trained without fine-tuning any parameters; the basic algorithm presented in \ref{algo:classification} with $\text{ARGMAX}$ for $\text{DECIDE}$ and absolute value for $\text{SCALE}_i$, and the defaults in scikit-learn \cite{scikit-learn} for all parameters in the other algorithms. Table \ref{tbl:clfn_exp_res} records the average and standard deviation of the accuracy on the testing dataset for all classifiers.

\begin{remark}
Our model performed quite similarly to $k$-nearest neighbors in our experiments, which is quite remarkable given the dramatic differences between the algorithms. We also note the promise it implies: our initial attempt at using the boundary detection properties of the weighting vector in a machine learning setting have matched the performance of a well-established and widely-used model. We believe this will be improved upon and expanded as techniques using the weighting vector are adopted more widely.
\end{remark}

% Set of 10 experiments run, each with a random 70-30 split of the data. Numbers are average accuracy $\pm$ standard deviation. Weight means Weight (abs). The Weight (cdf) classifier did not do significantly differently than abs, so omitting for simplicity.

%%%%%%%%%%%%%%%%%%%%%%%%%%%
% I removed some of the data sets in order to  make the table a little smaller. Ones I removed were ones  where we didn't have as good of performance
%%%%%%%%%%%%%%%%%%%%%%%%%%%

\begin{table*}[t]
\caption{}
\centering
{\small
\begin{tabular}{llllll}
\toprule
dataset &  K-Neighbors & Logistic Reg. & Rand. Forest &          SVM &       Weight \\
\midrule
2-d checkerboard &  0.92 $\pm$ 0.02 &   0.51 $\pm$ 0.04 &   {\bf 0.94 $\pm$ 0.01} &  0.62 $\pm$ 0.04 &  0.92 $\pm$ 0.01 \\
clevedata.mat    &  0.82 $\pm$ 0.04 &   {\bf 0.85 $\pm$ 0.02} &   0.82 $\pm$ 0.03 &  0.84 $\pm$ 0.03 &  0.84 $\pm$ 0.03 \\
dimdata.mat      &  0.94 $\pm$ 0.01 &   0.95 $\pm$ 0.01 &   0.95 $\pm$ 0.00 &  {\bf 0.96 $\pm$ 0.00} &  0.93 $\pm$ 0.01 \\
housingdata.mat  &  0.87 $\pm$ 0.02 &   0.87 $\pm$ 0.03 &   0.87 $\pm$ 0.02 &  0.87 $\pm$ 0.03 &  0.87 $\pm$ 0.02 \\
ionodata.mat     &  0.84 $\pm$ 0.05 &   0.89 $\pm$ 0.02 &   0.94 $\pm$ 0.02 &  {\bf 0.95 $\pm$ 0.02} &  0.81 $\pm$ 0.08 \\
iris             &  0.94 $\pm$ 0.04 &   0.87 $\pm$ 0.05 &   0.94 $\pm$ 0.04 &  {\bf 0.96 $\pm$ 0.03} &  0.85 $\pm$ 0.13 \\
sklearn digits   &  0.97 $\pm$ 0.01 &   0.96 $\pm$ 0.01 &   0.95 $\pm$ 0.01 &  {\bf 0.98 $\pm$ 0.01} &  0.97 $\pm$ 0.00 \\
ticdata.mat      &  0.85 $\pm$ 0.02 &   0.69 $\pm$ 0.03 &   {\bf 0.93 $\pm$ 0.02} &  0.88 $\pm$ 0.02 &  0.78 $\pm$ 0.03 \\
\bottomrule
\end{tabular}
%}
}
\end{table*}
\label{tbl:clfn_exp_res}
To demonstrate the \texttt{NULL} class label capabilities, we trained the magnitude classifier on examples of six and nine from the scikit-learn digits dataset, then predicted on images of ones, sixes, and nines. The confusion matrix with a \texttt{NULL} class threshold of $1-10^{-11}$ is shown in table \ref{tab:null_class_bset}.
% \begin{figure}
% \centering
%   \includegraphics[scale=0.2]{paper_arxiv/magnitude_cfn_mtx_null_class_best.png}
%   \caption{}
%   \label{null_class_bset}
% \end{figure}

\begin{table}
%[ht]
{\small
\centering
\begin{tabular}{lrrr}
\toprule
{} &  null &   6 &   9 \\
\midrule
null &    53 &   0 &   1 \\
6    &     1 &  53 &   0 \\
9    &     1 &   0 &  54 \\
\bottomrule
\label{tab:null_class_bset}
\end{tabular}
\caption{Confusion matrix for classifier with \texttt{NULL} class.}
}
\end{table}

\subsection{Active learning Experiments}
\begin{figure*}[ht!]
{\small
\label{fig:al_results}
{\centering
  \includegraphics[scale=0.33]{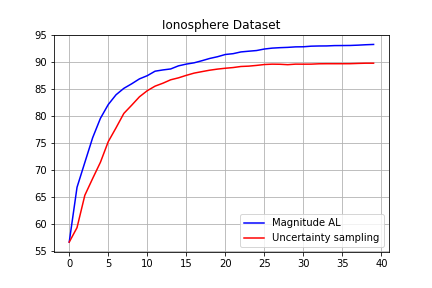}
  \includegraphics[scale=0.33]{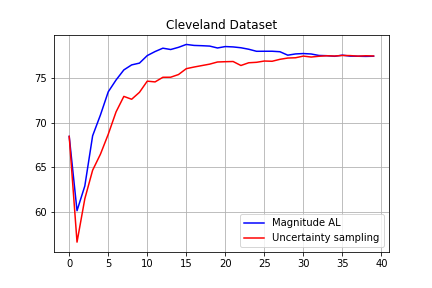}
  \includegraphics[scale=0.33]{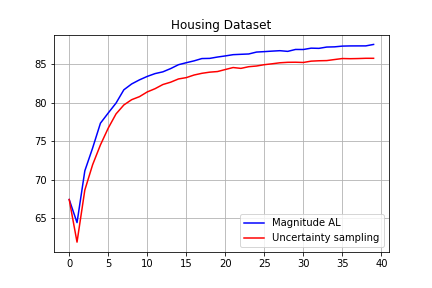}
  \includegraphics[scale=0.33]{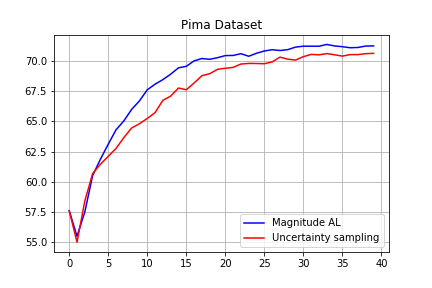}
  \includegraphics[scale=0.33]{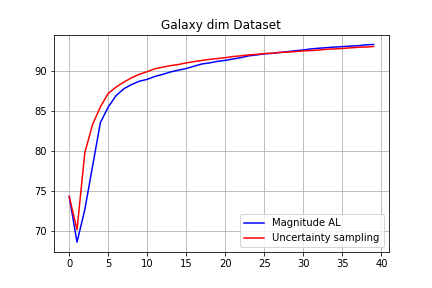}
  \includegraphics[scale=0.33]{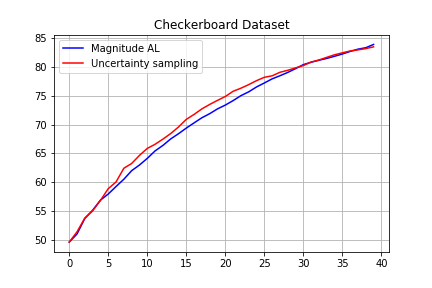}
  \caption{Active learning results comparing the weighting vector query strategy vs the uncertainty sampling strategy. Average over 100 runs}
  \label{al_experiments}
  }
  }
\end{figure*}
In order to assess the effectiveness of the weighting-vector-based active learning (AL) algorithm proposed in Section \ref{sec:al}, we compared Algorithm \ref{alg:al} to the simplest but highly effective and most commonly used query AL framework: uncertainty sampling \cite{lewis94}. In this framework, the AL algorithm queries the instances for which it is least certain about how to label (i.e. for many algorithms $p(label\|x) \approx 0.5$ or where the decision function is close to $0$). For simplicity we used a kernelized Ridge regression model \cite{cristianini2000} (also refer as to LS-SVM \cite{lssvm} or proximal SVM \cite{ProximalSVM}). Laplacian kernels were used both as magnitude to calculate the weighting vector and as classification kernel ($k(x, y) = \exp( -\gamma \| x-y \|_1)$ with $\gamma=0.1$. At each iteration of Algorithm \ref{alg:al} the classifier learned after obtained labels from the oracle has the form $f(x)=K(x,\mathcal{L})'w-w_0$, where $w_0$ is the bias term.

We performed experiments on five classic benchmark datasets from the UCI repository taking 67\% of the data as training pool and the remaining 33\% as a testing set. Note that the weighing-vector-inspired algorithm chooses $4$ points per iterations so we picked the four more uncertain points for the uncertainty sampling algorithm to be fair. 

Figure \ref{fig:al_results} shows average performance curves over 100 runs. The performance from the weighting vector algorithm seems to perform better in four out of the five datasets and slightly worse on the Galaxy dim. and Checkerboard datasets.
\section{Conclusions}
\label{sec:conclusions}
We apply the concepts of metric space magnitude and weighting vector to a wide variety of classical machine learning tasks.  We introduce practical algorithms that are suited to these tasks, and we demonstrate performance that is competitive with, and in many cases, surpasses the performance of benchmark methods.  Additionally, we introduce the notion that the weighting vector can accurately identify boundaries on scattered data that lives in a Euclidean space. 

Prior work in the field of metric space magnitude has generally been theoretical and focused on the magnitude functional itself, and the properties of the weighting vector have been overshadowed. Practical aspects of metric space magnitude and the weighting vector is still an emergent field.  Since magnitude and the weighting vector are well-defined for an extraordinarily wide class of sets, we believe that one natural aim of future work would be to develop vector weighting and magnitude into a robust, unifying foundation for the analysis of familiar, but also highly unusual, spaces.
% \clearpage
% \balance
\bibliography{practical_apps_of_magnitude}

\begin{thebibliography}{17}
\providecommand{\natexlab}[1]{#1}
\providecommand{\url}[1]{\texttt{#1}}
\expandafter\ifx\csname urlstyle\endcsname\relax
  \providecommand{\doi}[1]{doi: #1}\else
  \providecommand{\doi}{doi: \begingroup \urlstyle{rm}\Url}\fi

\bibitem[Barcel\'{o} \& Carbery(2018)Barcel\'{o} and
  Carbery]{Barcelo18CptEuclid}
Barcel\'{o}, J. and Carbery, A.
\newblock {On the magnitudes of compact sets in Euclidean spaces}.
\newblock \emph{American Journal of Mathematics}, 140\penalty0 (2):\penalty0
  449--494, 2018.
\newblock \doi{10.1353/ajm.2018.0011}.
\newblock URL \url{https://muse.jhu.edu/article/688522}.

\bibitem[Cristianini \& Shawe-Taylor(2000)Cristianini and
  Shawe-Taylor]{cristianini2000}
Cristianini, N. and Shawe-Taylor, J.
\newblock \emph{An Introduction to Support Vector Machines and Other
  Kernel-based Learning Methods}.
\newblock Cambridge University Press, 2000.

\bibitem[Edelsbrunner et~al.(2002)Edelsbrunner, Letscher, and
  Zomorodian]{Edelsbrunner2000TopologicalPA}
Edelsbrunner, H., Letscher, D., and Zomorodian, A.
\newblock Topological persistence and simplification.
\newblock \emph{Discrete {\&} Computational Geometry}, 28\penalty0
  (4):\penalty0 511--533, Nov 2002.
\newblock ISSN 1432-0444.
\newblock \doi{10.1007/s00454-002-2885-2}.

\bibitem[Folland(1999)]{folland1999real}
Folland, G.
\newblock \emph{{Real analysis: modern techniques and their applications}}.
\newblock Pure and applied mathematics. Wiley, 1999.
\newblock ISBN 9780471317166.

\bibitem[Fung \& Mangasarian(2001)Fung and Mangasarian]{ProximalSVM}
Fung, G. and Mangasarian, O.~L.
\newblock Proximal support vector machine classifiers.
\newblock In \emph{KDD '01}, 2001.

\bibitem[Leinster(2013)]{Leinster2013TheMO}
Leinster, T.
\newblock The magnitude of metric spaces.
\newblock \emph{Documenta Mathematica}, 18:\penalty0 857--905, 2013.

\bibitem[Leinster \& Shulman(2017)Leinster and Shulman]{leinster2017magnitude}
Leinster, T. and Shulman, M.
\newblock Magnitude homology of enriched categories and metric spaces, 2017.
\newblock URL \url{https://arxiv.org/abs/1711.00802}.

\bibitem[Lewis \& Gale(1994)Lewis and Gale]{lewis94}
Lewis, D.~D. and Gale, W.~A.
\newblock A sequential algorithm for training text classifiers.
\newblock \emph{CoRR}, abs/cmp-lg/9407020, 1994.

\bibitem[Meckes(2013)]{MeckesPosDef}
Meckes, M.
\newblock {Positive definite metric spaces}.
\newblock \emph{Positivity}, 17:\penalty0 733--757, Sept 2013.
\newblock \doi{10.1007/s11117-012-0202-8}.

\bibitem[Meckes(2015)]{meckes2015magnitude}
Meckes, M.~W.
\newblock Magnitude, diversity, capacities, and dimensions of metric spaces.
\newblock \emph{Potential Analysis}, 42\penalty0 (2):\penalty0 549--572, 2015.

\bibitem[Otter(2018)]{otter2018magnitude}
Otter, N.
\newblock {Magnitude meets persistence. Homology theories for filtered
  simplicial sets}, 2018.
\newblock URL \url{https://arxiv.org/abs/1807.01540}.

\bibitem[Pedregosa et~al.(2011)Pedregosa, Varoquaux, Gramfort, Michel, Thirion,
  Grisel, Blondel, Prettenhofer, Weiss, Dubourg, Vanderplas, Passos,
  Cournapeau, Brucher, Perrot, and Duchesnay]{scikit-learn}
Pedregosa, F., Varoquaux, G., Gramfort, A., Michel, V., Thirion, B., Grisel,
  O., Blondel, M., Prettenhofer, P., Weiss, R., Dubourg, V., Vanderplas, J.,
  Passos, A., Cournapeau, D., Brucher, M., Perrot, M., and Duchesnay, E.
\newblock {Scikit-learn: Machine Learning in Python }.
\newblock \emph{Journal of Machine Learning Research}, 12:\penalty0 2825--2830,
  2011.

\bibitem[Scopigno et~al.(2004)Scopigno, Zorin, Carlsson, Zomorodian, Collins,
  and Guibas]{Scopigno04persistencebarcodes}
Scopigno, R., Zorin, D., Carlsson, G., Zomorodian, A., Collins, A., and Guibas,
  L.
\newblock Persistence barcodes for shapes, 2004.

\bibitem[Settles(2009)]{activeLearningSurvey:Burr}
Settles, B.
\newblock Active learning literature survey.
\newblock Technical report, University of Wisconsin-Madison Department of
  Computer Sciences, 2009.

\bibitem[Solow \& Polasky(1994)Solow and Polasky]{Solow1994}
Solow, A.~R. and Polasky, S.
\newblock Measuring biological diversity.
\newblock \emph{Environmental and Ecological Statistics}, 1\penalty0
  (2):\penalty0 95--103, Jun 1994.
\newblock ISSN 1573-3009.
\newblock \doi{10.1007/BF02426650}.

\bibitem[Suykens \& Vandewalle(1999)Suykens and Vandewalle]{lssvm}
Suykens, J. and Vandewalle, J.
\newblock Least squares support vector machine classifiers.
\newblock \emph{Neural Processing Letters}, 9:\penalty0 293--300, 06 1999.

\bibitem[Zomorodian \& Carlsson(2004)Zomorodian and Carlsson]{computingPH}
Zomorodian, A. and Carlsson, G.
\newblock Computing persistent homology.
\newblock In \emph{Proceedings of the Twentieth Annual Symposium on
  Computational Geometry}, SCG ’04, pp.\  347–356, New York, NY, USA, 2004.
  Association for Computing Machinery.
\newblock ISBN 1581138857.
\newblock \doi{10.1145/997817.997870}.

\end{thebibliography}
\bibliographystyle{icml2020}
\end{document}